\documentclass[12pt]{article}

\setlength{\hfuzz}{3pt}

\usepackage{amsfonts}
\usepackage{fullpage}
\usepackage{graphicx}
\usepackage[ruled]{algorithm2e}
\usepackage{multirow}
\usepackage{graphicx}
\usepackage{mathptmx}      
\usepackage{amsmath,amsthm,amssymb}
\usepackage{xspace}
\usepackage{mathtools}
\usepackage{tikz}
\usepackage{subfig}
\usepackage{acronym}

\def\seq{\mathaccent"017E }
\newcommand{\qes}[1]{\reflectbox{\ensuremath{\seq{\reflectbox{\ensuremath{#1}}}}}}
\newcommand{\lexeq}{\ensuremath{\preceq_{\it lex}}\xspace} 
\newcommand{\lex}{\ensuremath{\prec_{\it lex}}\xspace}
\newcommand{\mmfeq}{\ensuremath{\preceq}\xspace}

\newcommand{\mmf}{\ensuremath{\prec}\xspace}
\newcommand{\mmfmax}{\ensuremath{\prec_{\textit{max}}}\xspace}
\newcommand{\rank}{\ensuremath{\operatorname{rank}}\xspace}
\newcommand{\MMFSA}{\textsc{MaxMinFair\_SA}\xspace}
\newcommand{\MMFSAGIP}{\textsc{MMF\_SA\_GLBOP}\xspace}
\newcommand{\MMFSANOGIP}{\textsc{MMF\_SA\_LSAP}\xspace}
\newcommand{\TEMP}{{\ensuremath \vartheta}}

\newcommand{\solutions}{{\ensuremath \mathcal{S}}\xspace}
\newcommand{\solution}{{\ensuremath S}\xspace}
\newcommand{\weight}{{\ensuremath W}\xspace}
\newcommand{\sumweight}{{\ensuremath W'}\xspace}
\newcommand{\instance}{{\ensuremath I}\xspace}
\newcommand{\sortedsequences}{{\ensuremath \operatorname{Seq}}\xspace}

\newcommand{\COST}[1]{\ensuremath{c_{\text{S}{#1}}}\xspace}

\usepackage{xspace}

\acrodef{GLBOP}{generalized lexicographic bottleneck optimization problem}
\acrodef{LBOP}{lexicographic bottleneck optimization problem}
\acrodef{LSAP}{linear sum assignment problem}
\acrodef{CB-CTT}{curriculum-based course timetabling}
\acrodef{MMF-CB-CTT}{max-min fair curriculum-based course timetabling}
\acrodef{SOP}{min-sum optimization problem}
\acrodef{LBAP}{lexicographic bottleneck assignment problem}
\acrodef{LVOP}{lexicographic vector optimization problem}

\usepackage{rotating}

\makeatletter
\newdimen\@@@tmpa
\newdimen\@@@tmpb
\newdimen\@@@tmpc
\newdimen\@@@tmpd
\def\clap#1#2#3{
       \setbox0=\hbox{#1}\setbox1=\hbox{#2}%
       \@@@tmpa\wd0\@@@tmpb\wd1\advance\@@@tmpa-\@@@tmpb%
       \ifdim\@@@tmpa>0pt\@@@tmpb\wd0%
       \else\@@@tmpb\wd1\fi\@@@tmpc\ht0\@@@tmpd\ht1%
       \advance\@@@tmpc\dp0\advance\@@@tmpd\dp1%
       \advance\@@@tmpc-\@@@tmpd\divide\@@@tmpc2%
       \ifdim\@@@tmpc>0pt\@@@tmpd\ht0\advance\@@@tmpd\dp0\@@@tmpc-\dp0%
       \else\@@@tmpd\ht1\advance\@@@tmpd\dp1\advance\@@@tmpc-\dp0\fi%
       \ifx#3\empty\else\advance\@@@tmpc#3\fi%
       \leavevmode\raise\@@@tmpc\hbox to \@@@tmpb{\rlap{\hbox to \@@@tmpb{\hss%
            \vbox to \@@@tmpd{\vss\box0\vss}\hss}}%
            \hss\vbox to \@@@tmpd{\vss\box1\vss}\hss}%
       }

\makeatother


\newcommand{\Discup}{\mathop{\clap{$\displaystyle\bigcup$}{$\cdot$}{}}}

\newtheorem{definition}{Definition}
\newtheorem{theorem}{Theorem}
\newtheorem{lemma}{Lemma}
\newtheorem{corollary}{Corollary}
\newtheorem{remark}{Remark}

\begin{document}

\title{A Decomposition of the Max-min Fair Curriculum-based Course
Timetabling Problem\thanks{Research funded in parts by the School of
Engineering of the University of Erlangen-Nuremberg.}\\ \Large The Impact of Solving Subproblems to Optimality}

\author{Moritz M\"uhlenthaler\qquad Rolf Wanka \\[2mm]
              Department of Computer Science \\
              University of Erlangen-Nuremberg,
              Germany \\
              \texttt{\{moritz.muehlenthaler,rolf.wanka\}@cs.fau.de}
}

\date{ }

\maketitle

\begin{abstract}
We propose a decomposition of the \ac{MMF-CB-CTT} problem.  The decomposition
models the room assignment subproblem as a generalized \acf{LBOP}. We show that
the generalized \ac{LBOP} can be solved efficiently if the corresponding sum
optimization problem can be solved efficiently. As a consequence, the room
assignment subproblem of the \ac{MMF-CB-CTT} problem can be solved efficiently.
We use this insight to improve a previously proposed heuristic algorithm for
the \ac{MMF-CB-CTT} problem. Our experimental results indicate that using the
new decomposition improves the performance of the algorithm on most of the 21
ITC2007 test instances with respect to the quality of the best solution found.
Furthermore, we introduce a measure of the quality of a solution to a max-min
fair optimization problem. This measure helps to overcome some limitations
imposed by the qualitative nature of max-min fairness and aids the statistical
evaluation of the performance of randomized algorithms for such problems. We
use this measure to show that using the new decomposition the algorithm 
outperforms the original one on most instances with respect to the average
solution quality.
\end{abstract}


\bigskip

\section{Introduction}

%
We consider a decomposition approach to a variant of the \acf{CB-CTT} problem.
The \ac{CB-CTT} problem has been proposed in the context of the timetabling
competition ITC2007~\cite{ITC2007} and has since then received a great deal of
attention in the timetabling community. The \ac{CB-CTT} problem models the task
of assigning timeslots and rooms to courses in
the setting of a university, and includes a number of requirements that
typically arise in real-world course timetabling applications. For instance,
courses occurring in the same curriculum must not be taught at the same time
and courses should be assigned to rooms with a sufficient number of seats.  The
combinatorial structure of the problem is quite complex and has been
investigated, for example, using polyhedral studies~\cite{Marecek:11,LL:12}.
Due to the structural complexity, solving instances with a large number of
courses and curricula is typically out of reach for exact methods. However, a
wealth of (meta-)heuristic methods have been applied successfully to generate
high quality timetables, even for large CB-CTT instances, see for
example~\cite{BGS:12,ATS:2008,Muller:2009}. 
In this work, our focus is 
the \acf{MMF-CB-CTT} introduced in~\cite{MW:12}, which replaces the original
optimization objective by a lexicographic bottleneck objective. The goal of the
MMF-CB-CTT problem formulation is to favor \emph{fair} timetables and thus
improve the overall stakeholder satisfaction. The underlying fairness concept
is (lexicographic) max-min fairness. 

It is a common technique to decompose the \ac{CB-CTT} problem into subproblems which
are easier to handle individually~\cite{LL:12,ATS:2008}. The usual approach is
to perform room and time\-slot assignments separately, but other approaches have
been explored as well~\cite{BMPR:10}. The \ac{CB-CTT} problem can be decomposed
into a bounded list coloring problem that models the timeslot assignment and,
for each timeslot, a \acf{LSAP} for assigning the courses in this timeslot to
rooms~\cite{Marecek:11,LL:12}. Unfortunately, there are dependencies between
\ac{LSAP}s for different
timeslots, so an optimal room assignment can only be obtained for a single
timeslot, while the rest of the timetable remains fixed. We show that for an
analogous decomposition of the \ac{MMF-CB-CTT} problem, the room assignment
subproblem for a single timeslot can also be solved efficiently by modeling it
as a \acf{GLBOP}, which is a generalization of the \acf{LBOP}
from~\cite{BR:91}. We show that the \ac{GLBOP} can be solved efficiently if the
corresponding sum optimization problem can be solved efficiently. 

Furthermore, we propose a new measure for the quality of a solution to an
optimization problem with a max-min fairness (lexicographic bottleneck)
objective such as the \ac{MMF-CB-CTT} problem. This measure helps to overcome
some limitations imposed by the qualitative nature of max-min fairness. We use
this measure to determine the average solution quality of a randomized
algorithm for the \ac{MMF-CB-CTT} problem.

We evaluate the quality of the timetables produced by the algorithm \MMFSA
from~\cite{MW:12}, with and without the new decomposition. In the original
algorithm the room assignment subproblem was modeled as an \ac{LSAP}. However,
an optimal solution to the \ac{LSAP} is not necessarily optimal for
corresponding the room assignment subproblem of the \ac{MMF-CB-CTT} problem.
Our experiments indicate that making use of the new decomposition improves the
best produced by the algorithm on 18 out of 21 \ac{CB-CTT} instances from the
ITC2007 competition. We use the aforementioned measure to show that the new
decomposition yields in an improved average solution quality for 16 out of 21
instances. According to the Wilcoxon rank-sum test (one-sided, significance
level $0.01$) \MMFSA using the new decomposition is significantly better than
the original approach on 12 instances.

The remainder of this work is organized as follows: In
Section~\ref{sec:background}, we provide relevant background on the \ac{CB-CTT}
and \ac{MMF-CB-CTT} problem formulations, as well as max-min fairness and the
assignment problem. In Section~\ref{sec:decomp}, we introduce the \ac{GLBOP}
and the decomposition of the \ac{MMF-CB-CTT} problem.  We propose the measure
of the max-min fairness of an allocation in Section~\ref{sec:iso}.
Section~\ref{sec:evaluation} presents our evaluation of the performance impact
of the decomposition for \ac{MMF-CB-CTT} problems. Section~\ref{sec:conclusion}
concludes the paper.


\section{Background}
\label{sec:background}

In this section we will provide some relevant background on fair resource
allocation, university course timetabling and the assignment problem. 

\subsection {Fair Resource Allocation}
\label{sec:fairness}

Fairness comes into play when scarce resources are distributed over a set of
stakeholders with demands. 
The topic of fairness and how to measure it has
received great attention for example in economics, where the
distribution of wealth and income is of interest~\cite{WelfareEconomics}. In
computer science, fairness aspects have been studied for example in the design
of network communication protocols, in particular in the context of bandwidth
allocation and traffic shaping~\cite{Lan:10,Jain:84}. Fairness aspects have
been addressed explicitly for example for various kinds of scheduling problems,
including personnel scheduling~\cite{Smet:12}, sports
scheduling~\cite{Ribeiro:06}, course scheduling~\cite{MW:12} and aircraft
scheduling~\cite{SK:08}. In the context of resource allocation allocation in
general, fairness has been studied in~\cite{BFT:11,Ogryczak:10}. 
Approximation algorithms for fair optimization problems have been studied
in~\cite{Kleinberg:01,KK:00}.

In the next sections of this work we will deal with a fair variant of a
university course timetabling problem from~\cite{MW:12} that builds on the
notion of \emph{max-min fairness}.  Consider the problem of allocating
resources to $n$ stakeholders. A resource
allocation induces an allocation vector $x = (x_1,\ldots,x_n)$, where $x_i$, $1
\leq i \leq n$, corresponds to the amount of resources allocated to stakeholder
$i$. Later on, we will deal with minimization problems almost exclusively, so
unless stated otherwise, we assume that we allocate to the stakeholders a cost,
penalty, or some similar quantity that is to be minimized. In this contexct, an
allocation is \emph{max-min fair}, if the cost that the worst-off stakeholder
has to cover is minimal, and under this condition, the second worst-off
stakeholder covers the minimal cost, and so forth. This concept can be
formalized as follows: Let $\qes x$ denote the sequence containing the entries
of the allocation $x$ sorted in non-increasing order. For allocation two
vectors $x$ and $y$, $x$ is at least as fair as $y$, denoted by $x \mmfeq y$,
if $\qes x \lexeq \qes y$, where $\lexeq$ is the usual lexicographic
comparison.  Now, an allocation $x$ is max-min fair, if $x \mmfeq y$ for all
feasible allocations $y$.

Max-min fairness enforces an efficient resource usage to some extent, since an
improved resource utilization is accepted to the benefit of a stakeholder as
long as it is not at the expense of another stakeholder who is worse-off.
Hence, a max-min fair allocation is Pareto-optimal. One limitation of max-min
fairness is that the concept is purely qualitative, i.\,e., given two
allocations $x$ and $y$, max-min fairness just determines which of the two is
fairer, but not by how much. In order to aid the statistical evaluation of the
performance of algorithms for max-min fair resource allocation problems, in
Section~\ref{sec:iso} we will introduce a metric for the difference in quality
between two allocation vectors which is compatible with the $\mmfeq$-relation.

\subsection{Curriculum-based Course Timetabling}
\label{sec:cb-ctt}
The academic course timetabling problem captures the task of assigning a set of
courses to rooms and timeslots in the setting of a university. In
Section~\ref{sec:decomp} we will focus on decompositions of two
particular variants of the academic course timetabling problem: the
\acf{CB-CTT} problem from track three of the
second international timetabling competition~\cite{ITC2007:CB-CTT}, and its max-min
fair version, \ac{MMF-CB-CTT}, proposed in~\cite{MW:12}. The \ac{CB-CTT}
formulation has attracted a great deal of interest in the research community
and the competition instances are popular benchmarking instances for comparing
different algorithms~\cite{ATS:2008,LL:12,MR:12}. The \ac{MMF-CB-CTT} problem
differs from the basic \ac{CB-CTT} formulation only with respect to the
objective function. We will now introduce some terminology and state
definitions relevant to the later sections of this work.

A \ac{CB-CTT} instance consists of a set of courses, a set of curricula, a set of
rooms, a set of teachers and a set of days. Each day is divided into a fixed
number of timeslots; a day together with a timeslot is referred to as a
\emph{period}. A period together with a room is called a \emph{resource}.
Each course consists of a set of events that need to be scheduled. A course is taught by
a teacher and has a fixed number of students attending it. A course can only be
taught in certain available periods. Each curriculum is a set of courses, no two
of which may be taught in the same period. Each room has a \emph{capacity}, a
maximum number of students it can accommodate. A solution to a \ac{CB-CTT} instance
is a \emph{timetable}, i.\,e., an assignment of the courses to the resources subject to
a number of hard and soft constraints. A timetable that satisfies all hard
constraints is \emph{feasible}. 

Later on, we will deal exclusively with feasible timetables, so we will not
cover the evaluation of hard constraints at all (please refer
to~\cite{ITC2007:CB-CTT} a detailed description). However, some understanding
of the soft constraint evaluation will be helpful later on, so we will touch on
this very briefly. The \ac{CB-CTT} problem formulation features the following soft
constraints:
\begin {enumerate}
\item[S1\label{itm:S1}] \emph{RoomCapacity}:
Each lecture should be assigned to a room of sufficient size.
\item[S2\label{itm:S2}] \emph{MinWorkingDays}:
The individual lectures of each course should be distributed over a certain
minimum number of days.
\item[S3\label{itm:S3}] \emph{IsolatedLectures}:
For each curriculum, all courses in the curriculum should be scheduled in adjacent periods.
\item[S4\label{itm:S4}] \emph{RoomStability}:
The lectures of each course should be held in the same room.
\end {enumerate}
The violation of a soft constraint results in a ``penalty'' for the timetable.
The total penalty of a timetable $\tau$ is aggregated by the objective function
$c$ which just sums the penalties for the individual soft constraint violations:
\begin{equation}
\label{eq:cbctt_objective}
	c(\tau) = \sum_{1 \leq i \leq 4} \COST{i}(\tau),
\end{equation}
where $\COST{1},\ldots,\COST{4}$ are the penalties determined by the soft
constraints (S1)--(S4). 
The relative importance of the different soft constraints is set by a weight
factor for each soft constraint. Since the weights will be of no relevance to
our arguments later on, we assume that appropriate weighting has been applied within
$\COST{1},\ldots,\allowbreak \COST{4}$.
A detailed specification of the penalty functions 
can be found in~\cite{ITC2007:CB-CTT}. 

\begin{definition}[\ac{CB-CTT} Problem]
Given a \ac{CB-CTT} instance $\instance$, find a feasible timetable $\tau$ such that $c(\tau)$
is minimal.
\end{definition}

A max-min fair variant of the \ac{CB-CTT} problem was defined in~\cite{MW:12}. Given
a \ac{CB-CTT} instance with curricula $u_1,\ldots,u_k$. The allocation vector of a
timetable $\tau$ is given by: 
\begin{equation} 
	\label{eq:allocation} A(\tau) = (c(u_1,\tau), c(u_2,\tau), \ldots, c(u_k,\tau)) \enspace,
\end{equation}
where $c(u_j,\tau) = \sum_{1 \leq i \leq 4} c_{\text{S}i}(u_j,\tau)$, $i \in \{1,\ldots,4\}$, is the
\ac{CB-CTT} objective function restricted to the events of the courses in curriculum
$u_j$, $j \in \{1,\ldots,k\}$.

\begin{definition}[\ac{MMF-CB-CTT} Problem]
Given a \ac{CB-CTT} instance $\instance$, find a feasible timetable $\tau$ such that $A(\tau)$
is max-min fair.
\end{definition}

\subsection{The Assignment Problem}
\label{sec:ap}

The assignment problem is a classical problem in combinatorial optimization
which appears in many applications, for example personnel scheduling, job 
scheduling and object tracking, just to name a few. For a comprehensive
overview of the body of research on the assignment problem and the applications
see~\cite{BAS:09,Pentico:07}. In \ac{CB-CTT} problem for example, the
assignment problem appears as a subproblem~\cite{LL:12,Lu:10}.  There exist
polynomial-time algorithms for many variants of the assignment problem. 

Let $A = B = \{1,\ldots,n\}$, for some $n \in \mathbb{N}$. An assignment of the elements
of $A$ to the elements of $B$ is a bijection $\sigma: A \rightarrow B$. 
Typically, assignment problems are optimization problems, i.\,e., among all
bijections from $A$ to $B$, we are looking for one that is optimal with respect
to a certain objective function. In the context of (fair) curriculum-based
course timetabling, we are in particular interested in two variants of the
assignment problem, namely the \acf{LSAP} and the \acf{LBAP}.

\begin{definition}[\ac{LSAP}]
Given a cost function $c: A\times B \rightarrow \mathbb{N}$, find a bijection
$\sigma: A \rightarrow B$ such that $\sum_{i = 1}^n c(i, \sigma(i))$ is
minimal.
\end{definition}

There exist various algorithms for solving LSAPs efficiently, including the
well-known Hungarian algorithm~\cite[p.~248ff]{Papadimitriou:98} and network
flow algorithms~\cite{GT:89}. In the following, let $T_{\rm LSAP}(n)$ be the
time complexity of solving an \ac{LSAP} instance with $|A| = |B| = n$.

When solving \ac{MMF-CB-CTT} problems using the decomposition proposed in the next
section, the task of finding max-min fair assignments occurs as a
subproblem. An assignment $\sigma: A \rightarrow B$ is called max-min fair,
if for any assignment $\sigma': A \rightarrow B$ we have $\vec c(\sigma)
\mmfeq \vec c(\sigma')$, where $\vec c(\sigma) = (c(i, \sigma(i)))_{i=1,\ldots,n}$
and $\mmfeq$ is the max-min fair comparison from Section~\ref{sec:fairness}. 

\begin{definition}[\ac{LBAP}]
Given a cost function $c: A\times B \rightarrow \mathbb{R}$, find a max-min fair 
bijection $\sigma: A \rightarrow B$.
\end{definition}

A LBAP can be transformed into a \ac{LSAP} by scaling the cost values appropriately.
This results in an exponential blow-up of the cost values, which may
be undesirable in practical applications~\cite{BR:91}. Alternatively, an LBAP
can be reduced to a lexicographic vector assignment, which
belongs to the class of \textit{algebraic assignment problems}~\cite{BZ:80}.
Using this reduction, a given \ac{LBAP} with a cost function $c$ can be solved in time
$O(kn^3)$, where $k$ is the number of distinct values attained by
$c$~\cite{Croce:99}. The reduction is straightforward: For each $j \in
\{1,\ldots,k\}$, let $e_j \in \mathbb{N}^k$ be the vector whose $j$-th
component is $1$ and all other components are $0$. The cost function $c$ is replaced by a vector-valued function
$c': A\times B \rightarrow \mathbb{N}^k$ such that $c'(a, \sigma(a)) = 
e_j$ if $c(a, \sigma(a))$ is the $j$-th largest value attained by $c$. An
assignment $\sigma$ that yields a lexicographically minimal cost vector
$\sum_{i=1}^n c'(i, \sigma(i))$ is an optimal solution to the corresponding
LBAP.


\section{Problem Decomposition}
\label{sec:decomp}

It is a common approach to decompose the \ac{CB-CTT} problem in a way that room and
timeslot assignment is preformed separately, see for
example~\cite{LL:12,Lu:10}.  For courses in a single timeslot, an optimal room
assignment can be determined by solving a \ac{LSAP} instance. In this section, we
establish a similar result for the \ac{MMF-CB-CTT} problem: An optimal room
assignment for the courses in a single timeslot can be determined by solving a
(generalized) \ac{LBOP} instance. In the spirit of Benders decomposition
approach~\cite{Benders:62}, we further consider decompositions of sum
optimization problems into master and subproblems such that the subproblems can
be solved efficiently. We derive sufficient conditions under which such a
decomposition of a sum optimization problem carries over to its generalized
lexicographic bottleneck counterpart.

In the following, we consider a generalization of the \ac{LBOP}
in~\cite{BR:91}. Let $E = \{e_1,\ldots,e_m\}$ be the ground set and let
$\solutions = \{\solution_1,\solution_2,\ldots\}$ be the set of
feasible solutions.  We assume that each solution $\solution \in \solutions$
contains exactly $n$ items from the ground set, that is $\solution \subseteq E$
and $|\solution| = n$. Furthermore, let $N \subset \mathbb{N}_0$ be a finite
set of the natural numbers. The weight function $w:\solutions \rightarrow
\mathcal{M}(N)$ assigns to a feasible solution a weight, that is a finite
multiset of the numbers in $N$. The weight of a solution $\solution \in
\solutions$ is the disjoint union of the individual weights,
\[
	\weight(\solution) = \Discup_{e \in \solution} w(e)\enspace.
\]
The weights of two solutions $\solution_1,\solution_2 \in \solutions$ can be
compared by arranging the items in $\weight(\solution_1)$ and
$\weight(\solution_2)$ in non-increasing order and performing a lexicographic
comparison of the sorted sequences. We can essentially the comparison $\mmfeq$
from Section~\ref{sec:fairness}, but the sorted sequences do not necessarily
have the same length. Let $\sortedsequences(N)$ be the set of
finite sequences on the alphabet $N$ that are arranged in non-increasing order.
For two sequences $s, s' \in \sortedsequences(N)$, $s \lexeq s'$ iff one of the
following is true: i) $s = s'$, ii) $s$ is a prefix of $s'$, iii) there is a
decomposition $s = zuv$, $s' = zuw$ such that $z$ is a maximal prefix of $s$
and $s'$, $u, v \in N$ and $u < v$.
Due to the conceptual similarity, we will use the symbol $\mmfeq$ for the
comparison with respect to max-min fairness in the new setting as well. The
\ac{GLBOP} is the following problem: 
\[
	\label{eq:glbop}
	\tag{\ac{GLBOP}}
	\min_{\mmfeq} \;\weight(\solution) \quad\text{ s.t.\ } \solution \in \solutions\enspace.
\]
Note that the minimum weight is determined according to the comparison {\mmfeq}. If
$|w(e)| = 1$ for each $e \in E$ then we have a \ac{LBOP} as defined
in~\cite{BR:91}.

Consider a weight function of the form $\sumweight:\solutions \rightarrow
\mathbb{N}$. Then the \ac{SOP} is the following problem:
\[
	\tag{\ac{SOP}}
	\min_{\leq} \;\sum_{e \in \solution} \sumweight(e) \quad\text{ s.t.\ } \solution \in \solutions\enspace.\label{eq:sop}
\]
Let $T_{n,m}$ be the time required for solving~\eqref{eq:sop}.

\begin{theorem}
\label{thm:glbop}
A \ac{GLBOP} instance can be solved in time $O(|N|\cdot T_{n,m})$.
\end{theorem}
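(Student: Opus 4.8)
The plan is to reduce the \ac{GLBOP} to the lexicographic minimization of a vector of sum-type objectives and then to solve that minimization with the given \ac{SOP} machinery. First I would make the combinatorial structure of the objective explicit. Write $N = \{v_1 > v_2 > \cdots > v_{|N|}\}$ and, for each ground-set element $e$ and each value $v_j$, let $m_j(e)$ denote the multiplicity of $v_j$ in the multiset $w(e)$. For a feasible solution $\solution$ the number of occurrences of $v_j$ in $\weight(\solution)$ is then $n_j(\solution) = \sum_{e \in \solution} m_j(e)$, which is additive over $\solution$ and hence has exactly the form of a \ac{SOP} objective. The first key step is to show that the comparison $\mmfeq$ on solution weights coincides with the ordinary lexicographic comparison of the count vectors $\nu(\solution) = (n_1(\solution), \ldots, n_{|N|}(\solution))$, read from the largest value to the smallest and minimized coordinate by coordinate.

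Establishing this equivalence is where the genuinely \emph{generalized} features of the problem must be handled, since---unlike in the ordinary \ac{LBOP}---the sorted sequences $\qes{\weight(\solution)}$ need not have equal length. I would argue directly from the three cases in the definition of $\lexeq$: comparing two sorted sequences position by position, the first position at which they differ is governed by the first index $j$ at which the counts $n_j$ differ, and having strictly fewer copies of the larger value $v_j$ makes a sequence lexicographically smaller, which is case (iii); the prefix case (ii) is recovered as a strict inequality $n_j = 0 < n_j'$ at the relevant coordinate, while case (i) corresponds to equal count vectors. I expect the careful bookkeeping here---verifying that a missing high value and a surplus of low values are ordered correctly---to be the main obstacle, precisely because it is the part of the argument that does not reduce to the classical fixed-length setting.

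Given the equivalence, the \ac{GLBOP} becomes the lexicographic minimization of the additive vector $(n_1,\ldots,n_{|N|})$ over $\solutions$. My preferred route is to run the given \ac{SOP} algorithm over the totally ordered commutative semiring $(\mathbb{N}^{|N|}, +, \lexeq)$, assigning each element $e$ the vector weight $\mu(e) = (m_1(e), \ldots, m_{|N|}(e))$ and replacing scalar addition and comparison of weights by componentwise addition and lexicographic comparison of these vectors. Because the \ac{SOP} objective is purely additive and a combinatorial solver's correctness rests only on the ordered-semiring axioms (associativity and commutativity of $+$, and monotonicity of $\lexeq$ under $+$), the same algorithm returns a solution minimizing the componentwise sum $\sum_{e \in \solution}\mu(e) = \nu(\solution)$ in the lexicographic order, which by the previous step is exactly a solution that is optimal with respect to $\mmfeq$. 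I would also present the equivalent big-$M$ intuition---solving a single \ac{SOP} with the base-$B$ scalar weight $\sum_{i=1}^{|N|} B^{\,|N|-i}\, m_i(e)$ for $B$ strictly larger than $\max_{\solution} |\weight(\solution)|$---and verify by induction, using $n_i(\solution) < B$, that the resulting positional encoding realizes the lexicographic order; this view is conceptually clean but makes the coefficients grow, which is the drawback the semiring formulation avoids.

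Finally, I would account for the running time. Each element weight $\mu(e)$ has $|N|$ components, every component stays bounded by $\max_{\solution} |\weight(\solution)|$, and each semiring addition or comparison costs $O(|N|)$ (componentwise add, respectively a first-difference scan). Hence, if the scalar \ac{SOP} solver performs $O(T_{n,m})$ additions and comparisons of weights, lifting it to the vector semiring multiplies the cost of each such operation by $O(|N|)$ and introduces no other overhead, for a total of $O(|N| \cdot T_{n,m})$; equivalently, the big-$M$ version runs $|N|$ \ac{SOP} calls (or one, under a strongly polynomial subroutine whose cost is independent of the weight magnitudes) within the same bound. Either way the number of distinct value levels $|N|$ is the only multiplicative overhead over solving a single \ac{SOP}, which is precisely the claim.
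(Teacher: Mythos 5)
Your proposal is correct and follows essentially the same route as the paper: both reduce the \ac{GLBOP} to lexicographic minimization of the additive multiplicity-count vectors (the paper's \ac{LVOP} reduction following Della Croce et al.), solve it by running the \ac{SOP} machinery with vector-valued weights, and charge an $O(|N|)$ overhead per weight operation; your big-$M$ aside matches the paper's remark that the vectorial approach is essentially cost scaling. You are in fact somewhat more explicit than the paper on two points it glosses over---the verification that \mmfeq on variable-length sorted sequences coincides with lexicographic order on count vectors, and the ordered-semiring axioms the \ac{SOP} solver must respect---but these are elaborations of the same argument, not a different one.
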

\begin{proof}
Following the vectorial approach of Della Croce et al.\ in~\cite{Croce:99},
reduce the \ac{GLBOP} to a \ac{LVOP}. Let $t = |N|$ and for $1,\leq i \leq t$,
let $t_i$ denote the $i$-th largest item in $N$. Let the function $f:
\mathcal{M}(N) \rightarrow \mathbb{N}^t$ assign to a multiset $v$ of the items 
$N$ a vector $(v_1,\ldots,v_t) \in \mathbb{N}^t$ such that
\[
	v_j = \operatorname{mult}_v(t_j), \quad j=1,\ldots,t\enspace,
\]
where $\operatorname{mult}_v:N \rightarrow \mathbb{N}$, and for each $a \in
\mathbb{N}$, $\operatorname{mult}_v(a)$ is the multiplicity of $a$ in
$v$. Now, we have to solve the following problem:
\[
\label{eq:lvop}
	\tag{LVOP}
	\min_{\lexeq} \;\sum_{e \in \solution} f(\weight(e)) \quad\text{ s.t.\ } \solution \in \solutions\enspace.
\]

We show that a solution $\solution \in \solutions$ is an optimal solution
to~\eqref{eq:lvop} if and only if it is an optimal solution
to~\eqref{eq:glbop}.  Suppose for a contradiction that $\solution \in
\solutions$ is an optimal
solution to~\eqref{eq:lvop} that is not optimal to~\eqref{eq:glbop}. Then there
is a solution $\solution' \in \solutions$ such that $\weight(\solution') \mmf
\weight(\solution)$. As a consequence, $f(\weight(\solution')) \lex
f(\weight(\solution))$ by the construction of the cost vectors above. The
``only if'' part can be shown analogously. This is a contradiction to the
optimality of $\solution'$. The problem~\eqref{eq:lvop} can be solved in time
$O(t\cdot T_{n,m})$ because each elementary operation involving the weights is
now performed on a vector of length $t$.
\end{proof}

As noted by Della Croce et al.\ in~\cite{Croce:99}, the vectorial approach is
essentially cost scaling. The construction of the cost vectors above enables us
to naturally handle multisets as cost values. We will see shortly that in the
context of the \ac{MMF-CB-CTT} problem, if an item of the ground set has a weight of
cardinality $k$, then choosing this item to be part of the solution concerns
$k$ different stakeholders.

Consider the following decomposition of the \ac{MMF-CB-CTT} problem: We
isolate, for a single period, the assignment of courses to rooms, from the rest
of the problem. So, the task is to find an optimal room assignment for a given
period, assuming that the rest of the timetable is fixed.  Optimizing the rest
of the timetable can be considered the master problem corresponding to the room
assignment subproblem for a particular period.  The room assignment subproblem
of the \ac{CB-CTT} problem is a \ac{LSAP} which can be solved efficiently (see
Section~\ref{sec:ap}). Our goal is to show that the room assignment subproblem
of the \ac{MMF-CB-CTT} problem can also be solved efficiently. In the
following, let $\instance$ be a \ac{MMF-CB-CTT} instance, where $C$ is the set
of courses, $R$ is the set of rooms, $P$ is the set of periods, and $U
\subseteq \mathcal{P}(C)$ are the curricula. By $C_p$ we denote the set of
courses scheduled in the period $p \in P$. Please note that $C_p$ is determined
by the solution of the master problem. Furthermore, let $U_e = \{ u \in U \mid
e \in u \}$.  

\begin{theorem}
\label{thm:ras}
The room assignment subproblem of the \ac{MMF-CB-CTT} problem is a \ac{GLBOP}.
\end{theorem}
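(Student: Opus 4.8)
\section*{Proof Proposal for Theorem~\ref{thm:ras}}

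The plan is to exhibit explicitly the four ingredients of a \ac{GLBOP}---the ground set $E$, the feasible solutions \solutions, the finite value set $N$, and the weight function $w$---and then to argue that the resulting objective $\min_{\mmfeq}\weight(\solution)$ coincides with the max-min fair room assignment subproblem for the period $p$. First I would take as ground set the set of all course-room incidences for period $p$, that is $E = C_p \times R$, padding $C_p$ or $R$ with dummy courses or rooms of empty weight so that $|C_p| = |R|$. A feasible solution \solution is then the edge set of a bijection $\sigma\colon C_p \to R$ that respects the hard constraints, so $\solution = \{(c,\sigma(c)) \mid c \in C_p\}$ and each solution contains exactly $n = |C_p|$ items, as the \ac{GLBOP} requires. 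Since all soft-constraint penalties are bounded integers, the set $N$ of attainable curriculum-cost values is finite.

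The crux is the weight function, and here I would first decompose a single curriculum's cost. With the remainder of the timetable held fixed, the cost $c(u,\tau)$ of a curriculum $u$ splits into a part independent of the room assignment in period $p$ (the contributions of \COST{2} and \COST{3}, together with \COST{1} and \COST{4} of all events outside $C_p$) and a variable part depending only on how the courses of $C_p$ are placed. The key structural fact I would invoke is the curriculum hard constraint: no two courses of the same curriculum may occupy the same period, so each curriculum contains \emph{at most one} course of $C_p$. Hence the variable part of $c(u,\tau)$ is governed by the single decision $(c,\sigma(c))$ for the unique course $c \in u \cap C_p$, giving $c(u,\tau) = B_u + \pi(c,\sigma(c))$, where $B_u$ is the fixed part and $\pi(c,r)$ is the room-capacity and room-stability penalty charged for placing $c$ in $r$. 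This lets me define, for each incidence $(c,r) \in E$,
\[
	w(c,r) = \Discup_{u \in U_c} \{\, B_u + \pi(c,r)\,\},
\]
a multiset of cardinality exactly $|U_c|$, which matches the earlier remark that an item of weight cardinality $k$ concerns $k$ stakeholders.

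It then remains to match the objectives. By construction $\weight(\solution) = \Discup_{(c,r)\in\solution} w(c,r)$ contains precisely one entry, equal to $c(u,\tau)$, for every curriculum $u$ with a course in period $p$, while the curricula having no course in $C_p$ contribute a constant multiset $K = \{\, B_u \mid u \cap C_p = \emptyset\,\}$ to the allocation vector $A(\tau)$ regardless of $\sigma$. I would then invoke the cancellation property of the sorted-sequence order---that $\weight(\solution) \mmfeq \weight(\solution')$ holds if and only if it holds after adjoining the same multiset $K$ to both sides---to conclude that minimizing \weight over \solutions under \mmfeq is equivalent to making $A(\tau)$ max-min fair, i.e., to solving the room assignment subproblem. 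The main obstacle I anticipate is justifying the additive decomposition cleanly: room stability (\COST{4}) couples a course's lectures across several periods, so I must argue carefully that, \emph{given the fixed rest of the timetable}, its contribution is a well-defined function of the single room chosen in period $p$. The at-most-one-course-per-curriculum observation is exactly what keeps this contribution attributable to one ground-set item, and hence keeps each curriculum represented as a single stakeholder in the multiset $\weight(\solution)$.
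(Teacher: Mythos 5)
Your proposal is correct and follows essentially the same route as the paper's proof: the same bipartite perfect-matching formulation (with dummy padding) whose ground set is the course--room edges for period $p$, and the same multiset edge weights $w(e,r) = \Discup_{u \in U_e} \{ c_{e \rightarrow r}(u) \}$ collecting one cost entry per affected curriculum. You merely make explicit two points the paper leaves implicit---that the curriculum hard constraint ensures each curriculum meets $C_p$ in at most one course, so each cost entry is attributable to a single edge, and that the fixed cost entries of curricula untouched in period $p$ cancel in the $\mmfeq$ comparison.
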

\begin{proof}
We construct a \ac{GLBOP} that models the room assignment subproblem for a
fixed period $p$. We can assume that $|C_p| = |R|$ since if not, we can add
suitables dummy nodes to either $C_p$ or $R$. Let $G = (C_p \cup R, E)$ be a
complete bipartite graph, where $E = \{ \{e, r\} \mid e \in C, r \in R\}$. 
The ground set of the \ac{GLBOP} is $E$ and the feasible solutions $\solutions$
are all perfect matchings in $G$. Assigning a course $e \in C_p$ to a room $r
\in R$ completes the timetable $\tau$ from the perspective of
all curricula in $U_e$ and therefore determines their cost entries $c(u, \tau)$
for each $u \in U_e$. We denote the cost entry $c(u, \tau)$, which can be
determined after the assignment of $e$ to $r$, by $c_{e \rightarrow r}(u)$.
Thus, the weight $w(e, r)$ of an item $\{e, r\} \in E$ is the
multiset 
\[
	w(e, r) = \Discup_{u \in U_e} \{ c_{e \rightarrow r}(u) \}\enspace.
\]
The costs entries of the
curricula that do not contain any course in $C_p$ are not altered and have no
influence on the optimality of a particular room assignment.
Therefore, the room assignment subproblem of the \ac{MMF-CB-CTT} problem can be
written as:
\[
\label{eq:ras}
\min_{\mmfeq} \quad \Discup_{\{e, r\} \in \solution} w(e, r) \quad
	\text{s.\,t. }  \solution \in \solutions\enspace,
\]
which is a \ac{GLBOP}. 
\end{proof}

Note that according to the \ac{MMF-CB-CTT} problem formulation, a course can
be assigned to any of the rooms. If desired, room availabilities can be added to
the model in a straightforward manner: If a room $r$ is unavailable for a
particular course $e$, then the edge $\{e, r\}$ in the ground set is assigned a
suitably large weight. 

Figure~\ref{fig:rascosts} shows an example of a simple room assignment
subproblem of the \ac{MMF-CB-CTT} problem modelled as a \ac{GLBOP}. There is
only a single period $p$, two courses, $C = C_p =
\{c_1, c_2\}$, and two rooms $R = \{r_1, r_2\}$. However, the course $e_2$ is
in two curricula and thus determines the cost entries of two stakeholders in
the overall allocation vector. The cost on each edge connected to $e_1$ shows
the costs generated for each of the two curricula when assigning $e_2$ to $r_1$
or $r_2$. Figures~\ref{fig:ap1} and~\ref{fig:ap2} are \ac{LBAP} instances that
reflect only costs for one of the two curricula of $e_2$.  The assignments
highlighted are both optimal solutions to the individual \ac{LBAP}s, but none
of them is optimal for the room assignment shown in Figure~\ref{fig:ap3}.

\begin{figure}
\begin{center}
	\subfloat[Room assignment subproblem: optimal cost 7, 5, 4]{
		\begin{tikzpicture}[vertex/.style={node distance=7.5em},weight/.style={auto,midway}]
			\node[vertex] (c1) [] {$r_1$};
			\node[vertex] (c2) [right of=c1] {$r_2$};
			\node[vertex] (r1) [above of=c1] {$e_1$};
			\node[vertex] (r2) [above of=c2] {$e_2$};

			\draw[] (c1) -- (r1) node [weight] {5};
			\draw[very thick,dashed] (c2) -- (r1) node [weight,above,near end,pin=above:{7}] {};
			\draw[very thick,dashed] (c1) -- (r2) node [weight,above,near end,pin=above:{5,4}] {};
			\draw[] (c2) -- (r2) node [weight,right] {7,6};
		\end{tikzpicture}
		\label{fig:ap3}
	}
	\hspace{2em}
	\subfloat[\ac{LBAP} 1: optimal cost 7,5]{
		\begin{tikzpicture}[vertex/.style={node distance=7.5em},weight/.style={auto,midway}]
			\node[vertex] (c1) [] {$r_1$};
			\node[vertex] (c2) [right of=c1] {$r_2$};
			\node[vertex] (r1) [above of=c1] {$e_1$};
			\node[vertex] (r2) [above of=c2] {$e_2$};

			\draw[very thick,dashed] (c1) -- (r1) node [weight] {5};
			\draw[] (c2) -- (r1) node [weight,above,near end,pin=above:{7}] {};
			\draw[] (c1) -- (r2) node [weight,above,near end,pin=above:{5}] {};
			\draw[very thick,dashed] (c2) -- (r2) node [weight,right] {7};
		\end{tikzpicture}
		\label{fig:ap1}
	}
	\hspace{2em}
	\subfloat[\ac{LBAP} 2: optimal cost 6,5]{
		\begin{tikzpicture}[vertex/.style={node distance=7.5em},weight/.style={auto,midway}]
			\node[vertex] (c1) [] {$r_1$};
			\node[vertex] (c2) [right of=c1] {$r_2$};
			\node[vertex] (r1) [above of=c1] {$e_1$};
			\node[vertex] (r2) [above of=c2] {$e_2$};

			\draw[very thick,dashed] (c1) -- (r1) node [weight] {5};
			\draw[] (c2) -- (r1) node [weight,above,near end,pin=above:{7}] {};
			\draw[] (c1) -- (r2) node [weight,above,near end,pin=above:{4}] {};
			\draw[very thick,dashed] (c2) -- (r2) node [weight,right] {6};
		\end{tikzpicture}
		\label{fig:ap2}
	}
\end{center}
\caption[Room Assignment Example]{A room assignment problem example with two
	courses, $e_1$ and $e_2$, and two rooms $r_1$ and $r_2$. The dashed edges
		are optimal assignments. The shown optimal solutions of the two
		\ac{LBOP}s~\subref{fig:ap1} and~\subref{fig:ap2} are not optimal for
		the \ac{GLBOP}~\subref{fig:ap3}.}
\label{fig:rascosts}
\end{figure}
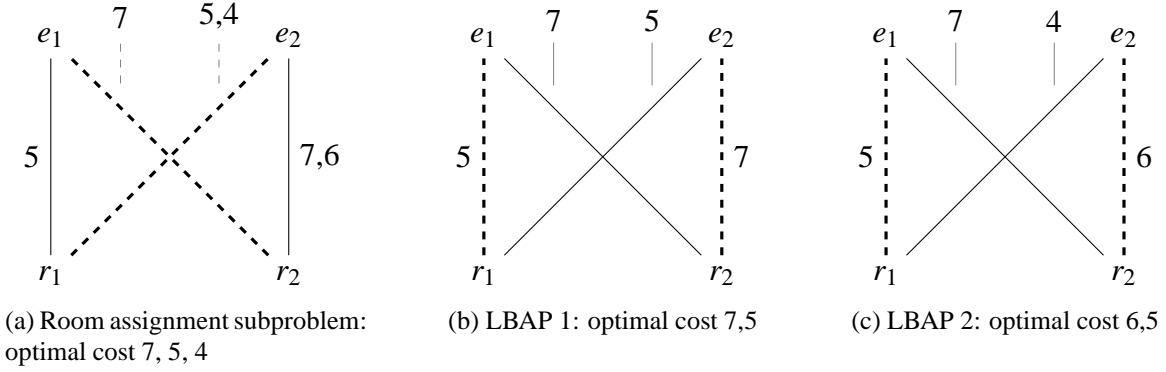

\begin{corollary}
\label{cor:ras}
For a given period $p$, the room assignment subproblem can be solved in time
$O(|U|\cdot T_{\rm LSAP}(n))$, where $n = \max \{ |C_p|, |R| \}$.
\end{corollary}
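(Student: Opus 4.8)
The plan is to combine Theorem~\ref{thm:ras} with Theorem~\ref{thm:glbop} and then to pin down the two quantities $|N|$ and $T_{n,m}$ appearing in the running-time bound of Theorem~\ref{thm:glbop} for the particular \ac{GLBOP} that arises here. By Theorem~\ref{thm:ras}, the room assignment subproblem for the fixed period $p$ is a \ac{GLBOP} on the ground set $E$ of the complete bipartite graph $G = (C_p \cup R, E)$, whose feasible solutions are the perfect matchings of $G$. Theorem~\ref{thm:glbop} then gives a running time of $O(|N| \cdot T_{n,m})$, so it remains to show that $T_{n,m} = T_{\rm LSAP}(n)$ and that the factor contributed by the distinct cost values is $O(|U|)$.

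First I would identify the \ac{SOP} corresponding to this \ac{GLBOP}. The scalar weight of an edge $\{e,r\}$ is the sum of the entries of the multiset $w(e,r)$, that is $\sumweight(e,r) = \sum_{u \in U_e} c_{e \rightarrow r}(u)$, and minimizing $\sum_{\{e,r\} \in \solution} \sumweight(e,r)$ over all perfect matchings $\solution$ of $G$ is exactly a minimum-cost perfect matching problem on a complete bipartite graph with $n = \max\{|C_p|,|R|\}$ vertices on each side (after the dummy-node padding used in the proof of Theorem~\ref{thm:ras}). Hence this \ac{SOP} is an \ac{LSAP}, and $T_{n,m} = T_{\rm LSAP}(n)$.

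The main step is to argue that the number of relevant distinct cost values is $O(|U|)$ rather than the a~priori possibly larger alphabet size $|N|$. Here I would invoke the defining property of a curriculum: no two courses of the same curriculum may be taught in the same period. Consequently each $u \in U$ contains at most one course of $C_p$, so $\sum_{e \in C_p} |U_e| = \sum_{u \in U} |\{e \in C_p : e \in u\}| \le |U|$. This means the weight multiset $\weight(\solution)$ of every feasible solution $\solution$ has at most $|U|$ entries, and therefore at most $|U|$ distinct values. In the vectorial reduction of the proof of Theorem~\ref{thm:glbop}, the cost vector $\sum_{e \in \solution} f(w(e))$ of any solution then has at most $|U|$ nonzero coordinates; representing these vectors sparsely (equivalently, restricting $N$ to the values that actually occur in any solution) lets every elementary operation on solution weights run in time $O(|U|)$ rather than $O(|N|)$.

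Combining the three observations, the \ac{GLBOP} modelling the room assignment subproblem is solved in time $O(|U| \cdot T_{\rm LSAP}(n))$, as claimed. I expect the only delicate point to be the bound on the number of relevant distinct values: one must use the hard constraint forbidding two courses of one curriculum in a common period to replace the generic factor $|N|$ of Theorem~\ref{thm:glbop} by the sharper factor $|U|$. The identification of the corresponding \ac{SOP} as an \ac{LSAP} and the final composition of the bounds are then routine.
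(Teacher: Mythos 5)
Your proposal follows the same skeleton as the paper's proof: Theorem~\ref{thm:ras} exhibits the subproblem as a \ac{GLBOP} whose feasible set is the set of perfect matchings of the (dummy-padded) complete bipartite graph, the associated \ac{SOP} is consequently an \ac{LSAP} so that $T_{n,m} = T_{\rm LSAP}(n)$, and Theorem~\ref{thm:glbop} supplies the multiplicative factor. The paper's proof in fact stops there --- it consists of the remark that only the objective function differs from the \ac{CB-CTT} room assignment subproblem and that combining Theorems~\ref{thm:glbop} and~\ref{thm:ras} yields the result --- and in doing so it \emph{silently} replaces the factor $|N|$ from Theorem~\ref{thm:glbop} by $|U|$. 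You correctly identified this replacement as the one non-routine step, and your justification is the right ingredient: since no two courses of a curriculum may share a period, each $u \in U$ contains at most one course of $C_p$, hence $\sum_{e \in C_p} |U_e| \le |U|$ and the weight multiset of every feasible solution has at most $|U|$ entries. The paper only gestures at this through its remark that a weight of cardinality $k$ ``concerns $k$ different stakeholders,'' so your write-up is in this respect more complete than the paper's own argument.

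Two caveats on your repair, both of which go beyond what the paper itself establishes rather than contradict it. First, the parenthetical ``equivalently, restricting $N$ to the values that actually occur in any solution'' is not an equivalence: in a complete bipartite graph every edge lies in some perfect matching, so this restriction can still leave up to $|R| \cdot \sum_{e \in C_p} |U_e| \le n|U|$ distinct values; it is the per-solution sparsity of the cost vectors, not a shrunken global alphabet, that does the work. Second, sparsity bounds comparisons and accumulations of (partial) solution weights by $O(|U|)$, but if $T_{\rm LSAP}(n)$ refers to a black-box solver such as the Hungarian algorithm, its internal quantities (dual potentials, reduced costs) are signed combinations of many edge-cost vectors whose supports may exceed $|U|$; a fully rigorous account must verify that the elementary weight operations of the chosen \ac{LSAP} algorithm remain $O(|U|)$-sparse, or else settle for the factor $|N|$ that Theorem~\ref{thm:glbop} literally provides. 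Since the paper's one-line proof does not address this point at all, your proposal should be regarded as a correct reconstruction of the intended argument together with a partial, still slightly informal, patch of its weakest step.
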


\begin{proof}
Problem~\eqref{eq:ras} is an assignment problem, just as the room assignment
subproblem of the \ac{CB-CTT} problem. Only the objective function is different.
Hence, combining Theorems~\ref{thm:glbop} and~\ref{thm:ras}
yields the result.
\end{proof}

\begin{remark}
As noted by Lach and L\"ubbecke in~\cite{LL:12}, the room stability constraint
(S4) introduces dependencies between room assignments in different timeslots,
which prevents us from extending the decomposition to more than a single
period.
\end{remark}

\begin{remark} 
In the general case, whenever there is a decomposition of~\eqref{eq:sop} into a master
problem and a subproblem, such that the subproblem can be solved efficiently,
then the subproblem of the corresponding max-min fair optimization problem can
be solved efficiently if it is a \ac{GLBOP}. This observation may be useful
when turning a problem of the form~\eqref{eq:sop} into a \ac{LBOP}.
\end{remark}

In Section~\ref{sec:evaluation}  we will provide experimental evidence that solving the room
assignment subproblem to optimality is useful for improving the performance of
a heuristic algorithm for the \ac{MMF-CB-CTT} problem.


\section {Quantifying Max-min Fairness}
\label{sec:iso}

When dealing with randomized optimization algorithms, one can employ a wealth
of statistical tools to extract meaningful information about algorithms'
absolute and relative performance. These tools include statistical tests such
as the Wilcoxon rank-sum test and measures such as the mean quality of the
solutions, the standard deviation, the median quality, the quality of the best
and worst solutions, and so on. Due to the qualitative nature of max-min
fairness, so far only statistical tools based on ranking can be used for
evaluating randomized max-min fair optimization algorithms. In this section we
propose a novel approach to partially overcome this limitation. Similar to
the problem \eqref{eq:glbop} from the previous section, we consider combinatorial
problems such that the cost of a feasible solution is a finite multiset.  The
main idea is to construct an isomorphism from the cost multisets ordered by
$\mmf$, to an interval of the natural numbers ordered by the usual $<$
relation. Using this isomorphism, we can perform all operations on natural
numbers, and retrieve the corresponding cost multiset. This means that if we
have a set of allocation vectors, we can determine for example an allocation
vector close to the average allocation.  Thus, in our experiments in the next
section we will be able to compare the average solution quality of two
algorithms for the \ac{MMF-CB-CTT} problems.

%

Let $k \in \mathbb{N}$ and $N = \{0,\ldots,k\}$. Further, let
$\sortedsequences_n(N)$ denote the non-increasing sequences of length $n$ over
the alphabet $(N, <)$. 

\begin{lemma}
	\label{lemma:rank}
	Let $\rank: \sortedsequences_n(N) \rightarrow \mathbb{N}_0$ be a mapping
	such that for any $s \in \sortedsequences_n(N)$, $s = x_1,\ldots,x_n$,
	\begin{equation}
		\label{eq:rank}
		\rank(s) = \sum_{i=1}^{n} \binom{n+x_i-i}{x_i-1} \enspace.
	\end{equation}
	Then $\rank$ is an isomorphism $(\sortedsequences_n(N), \lex) \rightarrow
	(\mathbb{N}_0, <)$.
\end{lemma}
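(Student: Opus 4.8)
The plan is to show that $\rank(s)$ counts exactly the sequences lying strictly below $s$, from which the isomorphism property is immediate. Concretely, I would prove the combinatorial identity
\[
  \rank(s) = \bigl|\{\, s' \in \sortedsequences_n(N) : s' \lex s \,\}\bigr|
\]
for every $s \in \sortedsequences_n(N)$. Granting this, $\rank$ is strictly order-preserving: if $s \lex t$, then the set of sequences below $s$ is a proper subset of those below $t$ (proper because $s$ itself lies below $t$ but not below $s$), so $\rank(s) < \rank(t)$. Since $\lex$ is a total order on the finite set $\sortedsequences_n(N)$, listing its elements in increasing order $s^{(0)} \lex s^{(1)} \lex \cdots \lex s^{(M-1)}$ with $M = \binom{n+k}{n}$ gives $\rank(s^{(j)}) = j$. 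Hence $\rank$ is an order-preserving bijection onto the initial segment $\{0,1,\ldots,M-1\}$ of $(\mathbb{N}_0,<)$, i.e.\ the claimed isomorphism (onto its image).

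It remains to establish the counting identity, which is where the real work lies. First I would observe that, since all sequences share the common length $n$, we have $s' \lex s$ if and only if there is a (necessarily unique) first index $i$ with $x'_i < x_i$ and $x'_j = x_j$ for all $j < i$; the prefix case of the \lexeq-definition cannot occur between distinct equal-length sequences. I would therefore partition $\{\, s' : s' \lex s \,\}$ according to this first-difference index $i$ and count each block separately.

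For a fixed $i$, a sequence $s'$ in the $i$-th block agrees with $s$ on the first $i-1$ entries, takes some value $x'_i = a$ with $0 \le a \le x_i - 1$ at position $i$ (the bound $a < x_i \le x_{i-1}$ automatically respects the non-increasing constraint), and then continues with an arbitrary non-increasing tail $x'_{i+1} \ge \cdots \ge x'_n$ over $\{0,\ldots,a\}$. The number of such tails of length $n-i$ over an alphabet of size $a+1$ is the multiset count $\binom{n-i+a}{a}$, so the $i$-th block has size $\sum_{a=0}^{x_i-1}\binom{n-i+a}{a}$. Applying the hockey-stick identity $\sum_{a=0}^{m}\binom{r+a}{a} = \binom{r+m+1}{r+1}$ with $r = n-i$ and $m = x_i-1$, and then the symmetry $\binom{a}{b} = \binom{a}{a-b}$, this collapses to $\binom{n+x_i-i}{x_i-1}$, exactly the $i$-th summand of~\eqref{eq:rank}. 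Summing over $i$ yields the identity.

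The main obstacle is the clean bookkeeping of the tail count and the hockey-stick collapse, together with honoring the binomial conventions at the boundary: when $x_i = 0$ the $i$-th block is empty, and the formula must correctly return $\binom{n-i}{-1} = 0$, so the convention $\binom{m}{-1} = 0$ should be fixed at the outset and the degenerate cases ($i=n$, or $x_i=0$) checked explicitly.
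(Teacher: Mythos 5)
Your proof is correct and takes essentially the same route as the paper's: both show $\rank(s) = |\{s' \in \sortedsequences_n(N) \mid s' \lex s\}|$ by splitting the predecessors of $s$ according to the first position where they drop below $s$ --- the paper's recursion $r_{x_1,\ldots,x_n} = r_{x_2,\ldots,x_n} + r_{x_1,0,\ldots,0}$, once unfolded, is exactly your first-difference decomposition. The only cosmetic difference is that the paper counts the $i$-th block in one step as the number of non-increasing sequences of length $n-i+1$ over $\{0,\ldots,x_i-1\}$, whereas you subdivide by the value $a$ at position $i$ and recombine via the hockey-stick identity, arriving at the same summand $\binom{n+x_i-i}{x_i-1}$.
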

 
\begin{proof}
$(\sortedsequences_n(N), \lex)$ is a linearly ordered set with least element
$(0,\ldots,0)$. Since $\sortedsequences_n(N)$ is linearly ordered by $\lex$,
there is a unique number $r_s$ for each $s \in \sortedsequences_n(N)$, which is
the cardinality of the set $\{s' \in \sortedsequences_n(N) \mid s' \lex s\}$.
Thus, the function mapping each $s \in \sortedsequences_n(N)$ to $r_s$ is a
bijective mapping and it is order-preserving as required. It remains to be
shown that $\rank(s)$ computes $r_s$ for all $s \in \sortedsequences_n(N)$.

Let $s = (x_1,\ldots,x_n) \in \sortedsequences_n(N)$. The value $r_s$ can be
determined by the following recursion:
\begin{equation}
	\label{eq:rank_rek}
	r_{x_1,\ldots,x_n} = r_{x_2,\ldots,x_n} + r_{x_1,0,\ldots,0} \enspace.
\end{equation}
This recursion separately counts the non-increasing sequences $\{ s' \in
\sortedsequences_n(N) \mid (x_1,0,\ldots,0) \lexeq s' \lex s \}$ and  $\{s' \in
\sortedsequences_n(N) \mid  s' \lex (x_1,0,\ldots,0) \}$.
The number of sorted sequences of length $n$ over an ordered alphabet
of size $k$ is
$\binom{n+k}{k}$. Thus, 
$$
   r_{\underbrace{x_1, 0,\ldots,0}_{\text{length }n}} = \binom{n+x_1-1}{x_1-1} \enspace.
$$
In particular, for $s \in \sortedsequences_1(N)$ we have $r_x = \binom{x}{x-1} = x$. 
Unfolding the recursion~\eqref{eq:rank_rek} yields~\eqref{eq:rank}. Therefore,
		  $\rank(s)$ computes $r_s$ for each $s \in \sortedsequences_n(N)$.
\end{proof}

The argument above can be extended to non-increasing sequences of
length \emph{at most} $n$ by choosing as alphabet $N \cup \{-\infty\}$ and
constructing from each sequence of length less than $n$ a sequence of length
$n$ by padding it with $-\infty$.
Please note that the alphabet can be any finite totally ordered set $(A,
<_A)$, since it is isomorphic to $\{0,\ldots,k\}$ for some $k\in
\mathbb{N}$.  

Let $\mathcal{M}_n(N)$ denote the finite multisets of cardinality $n$ over a
finite alphabet $(N, <)$. Similar to the problem~\eqref{eq:glbop}, consider an
instance of some combinatorial minimization problem with feasible solutions
$\solutions$ and a cost function of the form
\[
	w: \solutions \rightarrow \mathcal{M}_n(N)\enspace.
\]
Let $\qes x$ denote the non-increasing sequence of length $n$ containing the
items of $x \in \mathcal{M}_n(N)$.

\begin{theorem}
	\label{thm:iso}
	The mapping 
	\begin{align*}
	\rho: (\mathcal{M}_n(N), \mmf) & \rightarrow  (\mathbb{N}_0, <) \\ 
	\rho(x) & \mapsto \rank(\qes x)\enspace,
	\end{align*}
	is an isomorphism. 
\end{theorem}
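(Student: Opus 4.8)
The plan is to show that $\rho$ is a bijection that preserves order, by reducing everything to the already-established isomorphism $\rank$ from Lemma~\ref{lemma:rank}. The crucial observation is that the map $x \mapsto \qes x$, sending a multiset in $\mathcal{M}_n(N)$ to its non-increasing rearrangement, is itself a bijection onto $\sortedsequences_n(N)$: every finite multiset of cardinality $n$ has exactly one non-increasing arrangement, and conversely every non-increasing sequence of length $n$ determines a unique multiset by forgetting the order. So I would first state this bijection explicitly and observe that $\rho = \rank \circ (\cdot)^{\qes{}}$ is a composition of two bijections, hence bijective.

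The second and more delicate step is order-preservation. Here I would verify that the map $x \mapsto \qes x$ is not merely a bijection but an \emph{order-isomorphism} from $(\mathcal{M}_n(N), \mmf)$ to $(\sortedsequences_n(N), \lex)$. This is essentially true \emph{by the definition} of $\mmf$ given in Section~\ref{sec:fairness}: for two multisets $x$ and $y$, we declared $x \mmf y$ precisely when $\qes x \lex \qes y$ under the lexicographic comparison of their sorted sequences. Thus the rearrangement map transports $\mmf$ to $\lex$ by construction. Once this is in place, order-preservation of $\rho$ follows immediately from the order-preservation of $\rank$: for $x \mmf y$ we have $\qes x \lex \qes y$, whence $\rank(\qes x) < \rank(\qes y)$, i.e.\ $\rho(x) < \rho(y)$, and the converse direction uses the same chain of equivalences read backwards.

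The main obstacle, and the point deserving the most care, is ensuring that $\mmf$ really is a \emph{linear} (total) order on $\mathcal{M}_n(N)$ rather than merely a preorder, so that talking about an isomorphism of ordered sets is justified. Two distinct multisets of the same cardinality $n$ could in principle yield the same sorted sequence only if they were equal, so antisymmetry holds; and since $\lex$ is a total order on $\sortedsequences_n(N)$ (as used in Lemma~\ref{lemma:rank}), totality of $\mmf$ transfers back along the bijection. I would therefore make explicit that because all multisets under consideration have the \emph{same} cardinality $n$, their sorted sequences all have length exactly $n$, so only cases (i) and (iii) of the $\lexeq$ definition can occur and the prefix ambiguity of case (ii) never arises. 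This keeps the comparison a genuine linear order and lets me invoke Lemma~\ref{lemma:rank} verbatim.

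In short, the proof is a two-line composition argument once the single conceptual point---that sorting multisets of fixed cardinality is an order-isomorphism onto $\sortedsequences_n(N)$ carrying $\mmf$ to $\lex$---is stated cleanly; the rest is an immediate appeal to Lemma~\ref{lemma:rank}.
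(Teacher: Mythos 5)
Your proof is correct and follows essentially the same route as the paper's: both observe that sorting is a bijection from $\mathcal{M}_n(N)$ onto $\sortedsequences_n(N)$ carrying $\mmf$ to $\lex$, and then invoke Lemma~\ref{lemma:rank}. Your additional remarks (that the prefix case of $\lexeq$ cannot arise for equal-length sequences, and that $\mmf$ is therefore a genuine linear order) merely spell out details the paper leaves implicit.
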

\begin{proof}
Each multiset $x \in \mathcal{M}_n(N)$ can be represented in a unique way as a
non-increasing sequence of length $n$. Thus, we can apply the isomorphism \rank
from Lemma~\ref{lemma:rank} to $\qes x$.
\end{proof}

We establish a similar result for maximization problems. In this context, the
fairness of two allocations $x, y \in \mathcal{M}_n(N)$ can be compared as
follows: $x$ is fairer than $y$, denoted by $x \mmfmax y$, if $\seq y \lex \seq
x$, where $\seq x$ denotes the non-decreasing sequence of the items in $x$. Let
$m = \max\{ N \},\ldots,\max\{ N \} \in
\sortedsequences_n(N)$, and let $a-b$ denote the element-wise substraction of
two sequences $a, b \in \sortedsequences_n(N)$. Furthermore, let $\seq x$
denote the non-decreasing sequence of length $n$ containing the items of $x \in
\mathcal{M}_n(N)$.

\begin{theorem}
	The mapping
	\begin{align*}
	\rho: (\mathcal{M}_n(N), \mmfmax) & \rightarrow  (\mathbb{N}_0, <) \\ 
	\rho(x) & \mapsto \rank(m - \seq x)\enspace,
	\end{align*}
	is an isomorphism. 
\end{theorem}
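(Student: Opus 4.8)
The plan is to reduce this maximization statement to the minimization isomorphism of Theorem~\ref{thm:iso} by complementing the alphabet. For a multiset $x \in \mathcal{M}_n(N)$ let $\bar x = \{\, \max\{N\} - a \mid a \in x \,\}$ denote its complement, taken with multiplicities. First I would record the basic monotonicity facts: $\seq x$ is non-decreasing, whereas $m - \seq x$ is non-increasing (since $m$ is the constant sequence $\max\{N\},\ldots,\max\{N\}$ and coordinate-wise subtraction of a non-decreasing sequence flips the monotonicity), and moreover $m - \seq x$ is exactly $\qes{\bar x}$, the non-increasing arrangement of the multiset $\bar x$. Because $a \mapsto \max\{N\} - a$ is an order-reversing bijection of $N = \{0,\ldots,k\}$ onto itself, the map $x \mapsto \bar x$ is a bijection of $\mathcal{M}_n(N)$ onto itself. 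Hence $\rho(x) = \rank(m - \seq x) = \rank(\qes{\bar x})$ is the composition of this bijection with the mapping of Theorem~\ref{thm:iso}, which immediately yields that $\rho$ is a bijection with the same image as $\rank$.

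The crux is the order statement, for which I would prove the following claim: for non-decreasing sequences $s, t$ of equal length $n$, $t \lex s$ holds if and only if $m - s \lex m - t$. Since all sequences here share the length $n$, the prefix cases in the definition of $\lex$ cannot occur, so $t \lex s$ means precisely that at the first index $j$ where $s$ and $t$ differ one has $t_j < s_j$. Subtracting coordinate-wise from $m$ preserves the agreement on the indices before $j$ and turns $t_j < s_j$ into $\max\{N\} - s_j < \max\{N\} - t_j$, i.e. $(m-s)_j < (m-t)_j$; as $j$ remains the first index of disagreement, this is exactly $m - s \lex m - t$, and the converse is symmetric. Translated through the complement, this reads $x \mmfmax y \iff \seq y \lex \seq x \iff m - \seq x \lex m - \seq y \iff \bar x \mmf \bar y$, so complementation is in fact an order isomorphism $(\mathcal{M}_n(N), \mmfmax) \to (\mathcal{M}_n(N), \mmf)$.

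Finally I would chain the equivalences: $x \mmfmax y$ iff $\bar x \mmf \bar y$ (previous paragraph), iff $\rank(\qes{\bar x}) < \rank(\qes{\bar y})$ by Theorem~\ref{thm:iso} (equivalently, by order-preservation of \rank from Lemma~\ref{lemma:rank} applied to $\qes{\bar x} = m - \seq x$), iff $\rho(x) < \rho(y)$ by definition of $\rho$. Together with the bijectivity established above, this shows $\rho$ is an isomorphism. I expect the only genuine obstacle to be the bookkeeping around the monotonicity flip: one must check carefully that $\seq{}$ produces non-decreasing sequences while \rank and the paper's $\lex$ are set up for non-increasing ones, and that subtraction from $m$ both reverses each coordinate comparison and converts between the two monotonicities consistently. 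Once the claim of the middle paragraph is in hand, the remainder is routine composition of bijections and order-reversing maps.
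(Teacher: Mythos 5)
Your proposal is correct and matches the paper's (intended) argument: the paper omits the proof entirely, ending the theorem with a bare $\square$, precisely because the reduction you spell out is the expected one --- complementation $x \mapsto \bar x$ identifies $m - \seq x$ with $\qes{\bar x}$, reverses \lex coordinate-by-coordinate for equal-length sequences, and hands everything to Theorem~\ref{thm:iso} (equivalently, to the order-preservation of \rank from Lemma~\ref{lemma:rank}). Your careful handling of the monotonicity flip (non-decreasing $\seq x$ versus the non-increasing sequences that \rank expects) is exactly the bookkeeping the paper silently assumes, and no further ideas are needed.
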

\qed

\section{Evaluation}
\label{sec:evaluation}

In this section we are going to present experimental evidence for the
usefulness decomposition presented in Section~\ref{sec:decomp}. We compare the
performance of two randomized heuristic algorithms for the \ac{MMF-CB-CTT} problem,
both of which are based on the algorithm \MMFSA from~\cite{MW:12}. The first
algorithm is the
one that performed best in~\cite{MW:12}. It uses a decomposition of the
\ac{MMF-CB-CTT} problem that is similar to the one presented in this work, but
models the room assignment subproblem as \ac{LSAP}. Thus, we will refer to this
algorithm by \MMFSANOGIP. The second algorithm, \MMFSAGIP, uses the \ac{MMF-CB-CTT}
decomposition from Section~\ref{sec:decomp} and thus solves~\eqref{eq:glbop} to obtain
an optimal room assignment for a given period. Apart from how the room
assignment subproblem is solved, the the two algorithms are identical. We
compare both algorithms with respect to the best solutions they produce as well
as the average solution quality per instance. In order to determine average
results, we use the isomorphism $\rho$ from Theorem~\ref{thm:iso} as
described in the previous section. Our results indicate that the algorithm
which solves the \ac{GLBOP} room assignment subproblem significantly
outperforms the other one.

The algorithm \MMFSA is a variant of simulated annealing
(SA)~\cite{Kirkpatrick:83} which has been tailored to the \ac{MMF-CB-CTT} problem.
Simulated annealing iteratively generates new candidate solutions and keeps (or
		\emph{accepts}) a new solution if it is better. If the new solution is
worse, then it is accepted with a certain probability which depends on the
temperature \TEMP. There are three crucial design choices when adapting
simulated annealing to a particular problem: The cooling schedule, the
acceptance criterion, and the neighborhood structure. In both algorithms under
consideration, \MMFSAGIP and \MMFSANOGIP, we use the standard geometric cooling
schedule, which lets the temperature decay exponentially from a given
$\TEMP_{max}$ to a given $\TEMP_{min}$. Both algorithms use the acceptance
criterion based on the component-wise energy difference, which performed best
in a comparison of different acceptance criteria in~\cite{MW:12}. Also, both
algorithms use a neighborhood structure based on the well-known Kempe-move. A
Kempe-move swaps a subset of the events assigned to two given periods such
that the conflict constraints between the events are not violated. The
difference between both algorithms is how the room-assignment subproblems are
solved after performing a Kempe-move: \MMFSANOGIP solves two \ac{LSAP}s in order to
assign rooms efficiently, while \MMFSAGIP solves two \ac{GLBOP}s. Thus, the second
room assignment performed by \MMFSAGIP is optimal assuming that the rest of the
timetable is fixed. Our evaluation shows that this is beneficial for the
overall algorithm performance.

In order to compare the two algorithms, we performed 50 independent runs for
each algorithm on the 21 \ac{CB-CTT} instances from track three of the
timetabling competition ITC2007~\cite{ITC2007:CB-CTT}.
In each run, $10^6$ iterations of the simulated annealing procedure were performed. 
We did not tweak the temperature-related parameters of the algorithms
extensively, but determined experimentally that $\TEMP_{max} = 5$ and
$\TEMP_{min} = 0.01$, as suggested in~\cite{MW:12} work well.
Table~\ref{tab:best_results} shows the best allocation vectors obtained by
\MMFSANOGIP and \MMFSAGIP on the 21 \ac{CB-CTT} instances. To aid the
presentation of the results, Table~\ref{tab:best_results} shows the allocation
vectors in a compressed form: The penalty values are sorted in non-increasing
orders the multiplicities of the values are shown as exponents. For example, a table
entry of $5^210^11$ denotes an allocation vector in which the penalty value 5
appears two times, 1 appears one and 0 eleven times. Note that from the max-min
fairness perspective it is not important to which curriculum a penalty value
corresponds, so we omit this information. The results in
Table~\ref{tab:best_results} show that \MMFSAGIP finds better solutions than
\MMFSANOGIP on 18 instances, while the best solutions found by \MMFSANOGIP are better on
\texttt{comp04} and \texttt{comp18}. In addition to the best allocation vectors,
Table~\ref{tab:best_results} shows the average allocations over the 50 runs for
each instance. The average allocations have been computed using $\rho$ from
Section~\ref{sec:iso}: The allocation vectors were mapped to the natural
numbers, then the average was calculated and rounded to the nearest integer.
Finally, the result was mapped back to the corresponding equivalence class of
allocation vectors. A comparison of the average allocations shows that in this
respect, \MMFSAGIP outperforms \MMFSANOGIP on 16 instances while it is beaten
on the instances \texttt{comp05}, \texttt{comp08}, \texttt{comp15} and
\texttt{comp21}. 

We also performed the one-sided Wilcoxon rank-sum test with a significance
level of 0.01. According to the test, \MMFSAGIP yields significantly better
results than \MMFSANOGIP on instances \texttt{comp02}, \texttt{comp06},
		\texttt{comp07}, \texttt{comp08},
\texttt{comp10}, \texttt{comp13}, \texttt{comp14}, 
\texttt{comp17}, \texttt{comp18}, \texttt{comp19}, \texttt{comp20} and
\texttt{comp21}. In contrast, \MMFSANOGIP is not significantly better than
\MMFSAGIP on any of the 21 instances with this significance level. The results
of the Wilcoxon test are consistent with the data in
Table~\ref{tab:best_results}.

\begin{sidewaystable}[p] 
\centering
\footnotesize

\caption{\label{tab:best_results}Comparison of the best and average objective values of the solutions found by \MMFSANOGIP and \MMFSAGIP on
the 21 \ac{CB-CTT} instances from~\cite{ITC2007} for 50 independent runs per instance and per algorithm. For each instance, the best results and best average results are marked in bold face.}

\bigskip

\begin{tabular}{@{}r|c|c|c|c@{}}
& \multicolumn{2}{|c|}{\MMFSAGIP} & \multicolumn{2}{|c}{\MMFSANOGIP} \tabularnewline\hline
Instance			& best & average & best & average \tabularnewline\hline
\tt{comp01.ectt}	& $\bf 5^{2},0^{12}$	& $\bf 6,5,4^{5},3^{4},2,1,0$	& $5^{2},1,0^{11}$	& $6,5^{3},4,2^{2},1^{5},0^{2}$\tabularnewline
\tt{comp02.ectt}	& $\bf 2^{32},1^{5},0^{33}$	& $\bf 4^{5},3^{18},2^{16},1^{18},0^{13}$	& $4,3^{3},2^{26},1^{12},0^{28}$	& $5^{2},4^{30},3^{4},2^{17},1^{16},0$\tabularnewline
\tt{comp03.ectt}	& $\bf 6^{4},4^{11},2^{23},1,0^{29}$	& $\bf 6^{5},5^{40},4^{8},3^{2},2^{4},0^{9}$	& $6^{4},4^{12},2^{27},1^{3},0^{22}$	& $6^{10},5^{9},4,3^{24},2,1^{3},0^{20}$\tabularnewline
\tt{comp04.ectt}	& $6^{4},4^{2},2^{4},1^{7},0^{40}$	& $\bf 6^{4},4^{3},3^{6},2^{34},1^{8},0^{2}$	& $\bf 6^{4},4^{2},2^{4},0^{47}$	& $6^{4},4^{3},3^{11},2^{33},1^{4},0^{2}$\tabularnewline
\tt{comp05.ectt}	& $\bf 19^{2},18^{3},17^{3},16^{5},15^{2},14^{8},\ldots$	& $20^{3},18^{6},16,15^{9},14^{2},13^{3},\ldots$	& $19^{2},18^{3},17^{3},16^{6},15^{2},14^{7},\ldots$	& $\bf 20^{2},19^{4},18^{11},16^{3},15^{30},14^{7},\ldots$\tabularnewline
\tt{comp06.ectt}	& $\bf 12,4,2^{29},1^{15},0^{24}$	& $\bf 12,5^{6},4^{2},3^{9},2^{43},1^{4},0^{5}$	& $12,4^{3},3,2^{31},1^{12},0^{22}$	& $12,5^{11},4^{3},3^{5},2^{4},1^{5},0^{41}$\tabularnewline
\tt{comp07.ectt}	& $\bf 6,2^{14},1^{18},0^{44}$	& $\bf 6,4,3^{21},2^{9},1^{15},0^{30}$	& $6,3,2^{30},1^{25},0^{20}$	& $6,4^{2},3^{22},2^{7},1^{11},0^{34}$\tabularnewline
\tt{comp08.ectt}	& $\bf 6^{4},4^{2},2^{9},1^{6},0^{40}$	& $6^{4},4^{3},3^{23},2^{15},1^{14},0^{2}$	& $6^{4},4^{2},2^{11},1^{4},0^{40}$	& $ 6^{4},4^{4},3^{3},2^{9},1^{20},0^{21}$\tabularnewline
\tt{comp09.ectt}	& $\bf 6^{9},4^{11},2^{18},1^{2},0^{35}$	& $\bf 6^{10},5^{3},4^{8},3^{21},2^{12},1^{13},0^{8}$	& $6^{9},4^{14},2^{12},1^{6},0^{34}$	& $6^{10},5^{5},4^{4},3^{10},2^{10},1^{17},0^{19}$\tabularnewline
\tt{comp10.ectt}	& $\bf 2^{12},1^{5},0^{50}$	& $\bf 4,3^{34},2^{13},0^{19}$	& $2^{20},1^{9},0^{38}$	& $4^{10},3^{4},2^{24},1^{6},0^{23}$\tabularnewline
\tt{comp11.ectt}	& $0^{13}$	& $0^{13}$	& $0^{13}$	& $0^{13}$\tabularnewline
\tt{comp12.ectt}	& $\bf 10^{4},8^{26},7,6^{50},5^{2},4^{38},\ldots$	& $\bf 12^{2},10^{2},9^{45},8^{2},7^{12},6^{11},\ldots$	& $10^{5},8^{26},6^{48},5^{6},4^{42},3^{4},\ldots$	& $12^{3},11^{19},10^{9},9^{2},8^{21},7^{12},\ldots$\tabularnewline
\tt{comp13.ectt}	& $\bf 6^{6},4^{4},2^{12},1^{2},0^{42}$	& $\bf 6^{6},4^{6},3^{11},2^{24},1^{6},0^{13}$	& $6^{6},4^{4},2^{21},1^{4},0^{31}$	& $6^{6},4^{7},3^{11},2,1^{29},0^{12}$\tabularnewline
\tt{comp14.ectt}	& $\bf 8^{4},4^{3},2^{13},1^{3},0^{37}$	& $\bf 8^{4},4^{5},3^{5},2^{32},1^{11},0^{3}$	& $8^{4},4^{3},2^{16},1^{2},0^{35}$	& $8^{4},4^{13},3^{22},2^{10},1^{7},0^{4}$\tabularnewline
\tt{comp15.ectt}	& $\bf 6^{4},4^{10},2^{22},1^{4},0^{28}$	& $6^{11},5^{3},4^{8},3^{13},2^{7},1^{2},0^{24}$	& $6^{4},4^{11},3,2^{25},1^{5},0^{22}$	& $\bf 6^{8},5^{3},4^{18},3,2^{12},1^{18},0^{8}$\tabularnewline
\tt{comp16.ectt}	& $\bf 4^{5},2^{14},1^{3},0^{49}$	& $\bf 5^{5},4^{10},3^{6},2^{19},1^{6},0^{25}$	& $4^{5},3,2^{17},1^{7},0^{41}$	& $5^{12},4^{4},3^{16},2^{4},1^{22},0^{13}$\tabularnewline
\tt{comp17.ectt}	& $\bf 10^{2},6^{2},4^{5},2^{31},1^{8},0^{22}$	& $\bf 10^{2},6^{2},5,4^{32},2^{6},0^{27}$	& $10^{2},6^{2},4^{8},3,2^{25},1^{12},0^{20}$	& $10^{2},6^{2},5^{9},4^{29},3^{16},2^{7},1^{5}$\tabularnewline
\tt{comp18.ectt}	& $6,4^{18},3,2^{15},1^{4},0^{13}$	& $\bf 7,6^{19},5^{6},4^{3},3^{2},2^{9},1^{11},0$	& $\bf 6,4^{17},3^{3},2^{14},1^{3},0^{14}$	& $7^{4},5^{5},4^{20},3^{9},2^{4},1^{9},0$\tabularnewline
\tt{comp19.ectt}	& $\bf 6^{4},4^{6},2^{12},1^{5},0^{39}$	& $\bf 6^{14},5^{9},4^{7},2^{18},1^{6},0^{12}$	& $6^{4},4^{7},2^{14},1^{10},0^{31}$	& $7^{2},6^{4},5^{16},4^{2},3^{19},2^{4},1^{16},0^{3}$\tabularnewline
\tt{comp20.ectt}	& $\bf 4^{3},3^{2},2^{27},1^{16},0^{30}$	& $\bf 5^{2},4^{24},3^{4},2,1,0^{46}$	& $4^{5},3^{2},2^{34},1^{13},0^{24}$	& $5^{16},4^{6},3^{2},1^{7},0^{47}$\tabularnewline
\tt{comp21.ectt}	& $\bf 10,6^{4},4^{15},3,2^{25},1^{7},0^{25}$	& $10,6^{10},5^{33},4^{4},2,1^{6},0^{23}$	& $10,6^{4},5,4^{16},2^{27},1^{4},0^{25}$	& $\bf 10,6^{9},5^{15},4^{6},3,2^{31},0^{15}$\tabularnewline

\end{tabular}
\end{sidewaystable}

In contrast to experimental setup in~\cite{MW:12}, we did not use a timeout,
but set a fixed number of iterations for the direct comparison of \MMFSAGIP and
\MMFSANOGIP. The reason for this decision is that in practice, solving~\eqref{eq:glbop}
takes significantly more time than solving a \ac{LSAP}. The increase in
runtime is  due to overhead required by construction of the cost vectors
($O(|U|^2)$) and the $O(|U|)$ factor for solving the linear vector assignment
problem (see Corollary~\ref{cor:ras}).  Since we are mainly interested in the
implications of modelling the room assignment subproblem as a \ac{GLBOP} instead
of an \ac{LSAP}, both algorithms should be able to solve a similar number of
room-assignment subproblems. From the data shown in
Table~\ref{tab:best_results} we can conclude, that using the decomposition
presented in Section~\ref{sec:decomp} is clearly the smarter choice, since,
after solving equally many subproblems, it produces superior results
compared to the approach from~\cite{MW:12}. However, if we employ the
timeout as it was required for the ITC2007 competition
(see~\cite{ITC2007:CB-CTT}), \MMFSANOGIP is the better choice,
because it can perform significantly more iterations within the given
timeout. 


\section{Conclusion}
\label{sec:conclusion}

In this work we proposed a decomposition of the \ac{MMF-CB-CTT} problem
from~\cite{MW:12}. The decomposition models the room assignment subproblem as a
assignment problem with a generalized lexicographic bottleneck objective. Using
this decomposition, the room assignment subproblem can be solved in polynomial
time. We use this result to improve the performance of the \MMFSA algorithm
proposed in~\cite{MW:12}, which originally modelled the room assignment
as an \ac{LSAP}. 
In our experiments we compare the performance of two variants of the
\MMFSA algorithm, wich differ with respect to how the room assignment
subproblem is performed. Our results indicate that using the decomposition
proposed in Section~\ref{sec:decomp} improves the performance of the \MMFSA
algorithm on most of the ITC2007 benchmark instances.  Furthermore, we proposed
a measure for quantifying how fair a timetable is with respect to max-min
fairness. Using this measure helps to apply statistical methods in the analysis
of the performance of randomized optimization algorithms for optimization
problems with a bottleneck objective. In particular, it enables us to compare
the average solution quality of the two variants of the \MMFSA algorithm. The
results indicate that using the new decomposition, \MMFSA produces better
timetables on average for 16 out of 21 instances.


\bibliographystyle{plain}
\bibliography{paper}   

\end{document}